\theoremstyle{thmstyleone}%
\newtheorem{theorem}{Theorem}%  meant for continuous numbers
\newtheorem{proposition}[theorem]{Proposition}%
\theoremstyle{thmstyletwo}%
\theoremstyle{thmstylethree}%
\newtheorem{definition}{Definition}%
\begin{document}

\title[Article Title]{KrwEmd: Revising the Imperfect-Recall Abstraction from Forgetting Everything}

%%=============================================================%%
%% Prefix   -> \pfx{Dr}
%% GivenName    -> \fnm{Joergen W.}
%% Particle -> \spfx{van der} -> surname prefix
%% FamilyName   -> \sur{Ploeg}
%% Suffix   -> \sfx{IV}
%% NatureName   -> \tanm{Poet Laureate} -> Title after name
%% Degrees  -> \dgr{MSc, PhD}
%% \author*[1,2]{\pfx{Dr} \fnm{Joergen W.} \spfx{van der} \sur{Ploeg} \sfx{IV} \tanm{Poet Laureate}
%%                 \dgr{MSc, PhD}}\email{iauthor@gmail.com}
%%=============================================================%%

% \author[1,2]{\fnm{First} \sur{Author}}

% \author[2,3]{\fnm{Second} \sur{Author}}
% %\equalcont{These authors contributed equally to this work.}

% \author[1,2]{\fnm{Third} \sur{Author}}
% %\equalcont{These authors contributed equally to this work.}

% \affil[1]{\orgdiv{Department}, \orgname{Organization}, \orgaddress{\city{City} \postcode{100190},  \country{Country}}}

% \affil[2]{\orgdiv{Department}, \orgname{Organization}, \orgaddress{\city{City} \postcode{10587},  \country{Country}}}

% \affil[3]{\orgdiv{Department}, \orgname{Organization}, \orgaddress{\city{City} \postcode{610101},  \country{Country}}}

\author[1,2]{\fnm{Yanchang} \sur{Fu}}\email{fuyanchang2020@ia.ac.cn}
\author*[2]{\fnm{Qiyue} \sur{Yin}}\email{qyyin@nlpr.ia.ac.cn}
\author[2]{\fnm{Shengda} \sur{Liu}}\email{shengda.liu@ia.ac.cn}
\author[2]{\fnm{Pei} \sur{Xu}}\email{pei.xu@ia.ac.cn}
\author[2]{\fnm{Kaiqi} \sur{Huang}}\email{kqhuang@nlpr.ia.ac.cn}

\affil[1]{\orgdiv{School of Artificial Intelligence}, \orgname{University of Chinese Academy of Sciences}}

\affil[2]{\orgdiv{Center for Research on Intelligent System and Engineering \& Key Laboratory of Complex System Intelligent Control and Decision}, \orgname{Institute of Automation, Chinese Academy of Sciences}}

%%==================================%%
%% sample for unstructured abstract %%
%%==================================%%

\abstract{Excessive abstraction is a critical challenge in hand abstraction—a task specific to games like Texas hold'em—when solving large-scale imperfect-information games, as it impairs AI performance. This issue arises from extreme implementations of imperfect-recall abstraction, which entirely discard historical information. This paper presents KrwEmd, the first practical algorithm designed to address this problem. We first introduce the k-recall winrate feature, which not only qualitatively distinguishes signal observation infosets by leveraging both future and, crucially, historical game information, but also quantitatively captures their similarity. We then develop the KrwEmd algorithm, which clusters signal observation infosets using earth mover's distance to measure discrepancies between their features. Experimental results demonstrate that KrwEmd significantly improves AI gameplay performance compared to existing algorithms.}

\keywords{hand abstraction, imperfect information games, signal observation ordered games, KrwEmd}

%%\pacs[JEL Classification]{D8, H51}

%%\pacs[MSC Classification]{35A01, 65L10, 65L12, 65L20, 65L70}

\maketitle

\section{Introduction}\label{sec1}

Abstraction refers to the process of simplifying complex games by grouping similar states or actions into broader categories, thereby improving decision-making and computational efficiency. Heads-Up No-Limit Hold'em (HUNL)—a classic imperfect-information game (IIG) and a critical AI testbed—benefits notably from this technology: abstraction enables AI systems to outperform human experts even with limited computational resources \citep{moravvcik2017deepstack, brown2018superhuman, brown2019superhuman}.

% In imperfect-information games (IIGs) like Texas Hold'em, imperfect-recall abstraction~\cite{waugh2009practical} further boosts computational efficiency by relaxing memory consistency constraints, enabling AI systems to outperform human experts even with limited resources~\citep{moravvcik2017deepstack, brown2018superhuman, brown2019superhuman}.

Hand abstraction in Texas Hold’em—framed as an unsupervised representation learning task—groups similar hands into equivalence classes, enabling strategy generalization across related game scenarios. Additionally, imperfect-recall abstraction \cite{waugh2009practical} boosts computational efficiency further by relaxing memory consistency constraints. Notably, to streamline computations even more, this abstraction is often implemented in an extreme form: it entirely discards historical information and focuses solely on current and future data \citep{gilpin2006competitive, gilpin2007better, gilpin2007potential, gilpin2008expectation, ganzfried2014potential}.

However, recent research \citep{fu2025beyond} has shown that this extreme implementation of imperfect-recall abstraction can lead to \textbf{excessive abstraction}—a problem where hands with significant differences are grouped into the same category. As illustrated in Figure \ref{fig:considering_historical_vs_future_only}, when historical information is discarded, hands A and B appear identical; yet when historical context is considered, the two differ substantially. This issue exacerbates as the game progresses, with excessive abstraction becoming more severe in late-game 
phases and impairing the solver's global perspective.

Potential-aware outcome isomorphism (PAOI) was introduced to characterize excessive abstraction: it represents the maximum capability of imperfect-recall abstraction (which entirely discards historical information) to identify hand equivalence classes, with discriminative power far weaker than the ground truth. In contrast, full-recall outcome isomorphism (FROI) was proposed to integrate historical information into abstraction—this enables finer distinctions between hands and mitigates excessive abstraction. Notably, FROI exhibits far stronger ability to identify hand equivalence classes than PAOI, and its superiority has been validated by experiments~\cite{fu2025beyond}.

\begin{figure}[h]%
  \centering
  \vspace{-15pt}
  \includegraphics[width=0.36\textwidth]{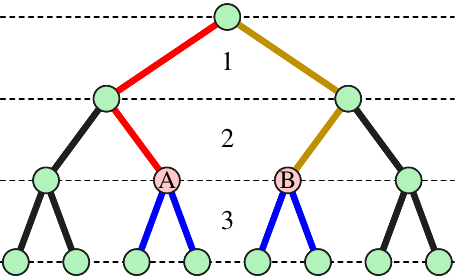} % 替换为你的图片文件名
  \caption{In a 3-phase hand abstraction task for games, the objective is to group hands A and B—they share the same future trajectory despite following distinct historical paths (consistent information is represented by paths of the same color). However, when an imperfect-recall abstraction that entirely discards historical information is adopted, both hands are assigned identical features.}
  \label{fig:considering_historical_vs_future_only}
\end{figure}

However, two critical gaps remain: (1) FROI lacks the capability for further clustering and cannot adjust the number of categories, making it not a true hand abstraction algorithm. (2) Previous comparisons between FROI and PAOI relied on maximum category counts—while FROI yields more categories, it remains unclear whether integrating historical information still outperforms entirely discarding it when the number of hand equivalence classes is equal.
 
This paper advances hand abstraction by addressing these critical gaps through a novel framework for constructing hand features based on winrates. We introduce the k-recall winrate feature, which retains approximately 90\% of the categorical discriminability of FROI while using significantly less data, enabling quantitative similarity assessment. By combining this feature with earth mover's distance (EMD), we develop KrwEmd—the first practical hand abstraction algorithm integrating historical information. 

Experiments in the Numeral211 Hold’em environment validate our approach: first, when the number of clusters is matched to PAOI, KrwEmd outperforms PAOI significantly. Second, in broader clustering scenarios—when compared against other hand abstraction algorithms (including the state-of-the-art (SOTA) potential-aware abstraction with earth mover's distance, PAAEMD~\cite{ganzfried2014potential}) under the same cluster count conditions—KrwEmd also demonstrates superior performance. These results collectively confirm the value of integrating historical information in hand abstraction.

\subsection{Related Works}

Hand abstraction in Texas Hold'em builds on foundational work by \citet{shi2001abstraction} and \citet{billings2003approximating}, who introduced game simplification via manual hand grouping. \citet{gilpin2006competitive} pioneered automated methods, followed by \citet{gilpin2007lossless}, who formalized \textbf{lossless isomorphism (LI)} for ordered signals in hold'em—though its low compression ratio limited scalability. This shifted focus to lossy methods: \citet{gilpin2007better} proposed expectation-based clustering(\textbf{EHS; expected hand strength}); \citet{gilpin2007potential} introduced potential-aware abstractions; and \citet{ganzfried2014potential} developed the state-of-the-art \textbf{PAAEMD}, which was integrated into top AI systems like DeepStack, Libratus and Pluribus~\citep{moravvcik2017deepstack, brown2018superhuman, brown2019superhuman}. These methods, however, rely on extreme imperfect-recall abstraction, discarding historical information entirely. Later, \citet{fu2025beyond} formally mathematically modeled the hand abstraction task in Texas Hold'em as \textbf{signal observation abstraction}, proposed a \textbf{resolution bound} metric to evaluate such abstraction, revealed the problem of excessive abstraction by introducing PAOI, and further put forward a framework for integrating historical information—namely FROI.

Following the breakthroughs of AI systems like DeepStack, Libratus and Pluribus, efforts have emerged to solve poker AI using deep learning (DL) and deep reinforcement learning (DRL)—approaches that encode hands without explicitly extracting their strength information, achieving progress in solution development though falling short of surpassing Libratus’ performance \citep{brown2019deep, brown2020combining, lidouble, liu2023model, DBLP:conf/iclr/McAleerFLS23, heinrich2016deep, steinberger2020dream, zhao2022alphaholdem, meng2023efficient}. One key reason lies in the inherent challenge within large-scale imperfect-information games (IIGs): DL/DRL methods face overlapping objectives, as they must simultaneously learn hand representations and optimize strategies—both derived indirectly from trajectory outcomes. This dual burden ultimately limits training efficiency in large-scale poker scenarios.

\section{Background and Notation}

Texas Hold'em-style poker games (referred to as hold'em games throughout this paper) are typically modeled as imperfect-information games (IIGs), but hand abstraction—core to simplifying these games—requires a more tailored framework to disentangle the independent dimensions of chance-dealt signals (hands) and rational players' actions. This section first reviews foundational IIG concepts, then introduces the game model Signal Observation Ordered Games (SOOGs) proposed by \citet{fu2025beyond}—a constrained subset of IIGs designed explicitly for modeling hand abstraction tasks—and finally formalizes hand abstraction task within the SOOG framework.

\subsection{Imperfect-Information Games (IIGs)}
We begin with the standard IIG definition, which forms the basis for SOOGs.

\begin{definition}[Imperfect-Information Game (IIG)] \label{def:iig}
An imperfect-information game \(\mathcal{G}\) consists of the following fundamental elements:

\begin{itemize}[left=0cm]
    \item \(\mathcal{N}_c = \mathcal{N} \cup \{c\}\): A finite collection of players. Here, \(\mathcal{N} = \{1, \ldots, N\}\) stands for rational players (the actual decision-makers in the game), while \(c\) denotes a special player representing stochastic elements (often called \textbf{nature} or \textbf{chance}).
    
    \item \(\mathcal{A}\): The complete set of possible actions that can be taken by any player in \(\mathcal{N}_c\) during the game.
    
    \item \(H\): A finite set of histories, each being a sequence of successive player actions. The initial history is the empty sequence \(h^o \in H\). We write \(h \sqsubset h'\) to indicate that \(h\) is a predecessor of \(h'\) (and \(h'\) is a successor of \(h\)); formally, this means there exist actions \(a_1, a_2, \ldots, a_k \in \mathcal{A}\) such that \(h' = h \cdot a_1 \cdot a_2 \cdot \ldots \cdot a_k\) (i.e., \(h'\) is formed by appending the sequence of actions to \(h\)).
    
    \item \(Z \subseteq H\): The subset of terminal histories—histories with no subsequent actions—where the game terminates.
    
    \item \(\rho: H \setminus Z \mapsto \mathcal{N}_c\): An action assignment function that specifies which player is to act at each non-terminal history. This function partitions \(H\backslash Z\) into \(\{H_1, \ldots, H_N, H_c\}\), where \(H_i = \{ h \in H\backslash Z \mid \rho(h) = i \in \mathcal{N}_c \}\) represents the set of histories where player \(i\) is the one to act.
    
    \item \(A: H \setminus Z \mapsto 2^{\mathcal{A}}\): A function defining legal actions, mapping each non-terminal history to the set of actions that can be legally taken at that history. Here, \(2^{\mathcal{A}}\) denotes the power set of \(\mathcal{A}\) (i.e., all possible subsets of \(\mathcal{A}\)).
    
    \item \(\zeta\): A probability distribution function for chance events. For each non-terminal history \(h \in H_c\) (where nature acts), \(  \zeta(h, \cdot): A(h) \to [0,1]\) specifies a probability distribution over the legal actions in \(A(h)\), with the normalization condition \(\sum_{a \in A(h)}  \zeta(h, a) = 1\) ensuring it is a valid distribution.
    
    \item \(u = (u_i)_{i \in \mathcal{N}}\): A tuple of utility functions. For each rational player \(i \in \mathcal{N}\), \(u_i: Z \to \mathbb{R}\) assigns a real payoff to \(i\) at each terminal history \(z \in Z\), which determines the reward \(i\) receives when the game ends at \(z\).
    
    \item \(\mathcal{I} = (\mathcal{I}_i)_{i \in \mathcal{N}}\): A set of information partitions. For each \(i \in \mathcal{N}\), \(\mathcal{I}_i\) divides \(H_i\) into mutually exclusive \textbf{information sets (infosets)}. If two histories \(h, h' \in H_i\) belong to the same infoset \(I \in \mathcal{I}_i\), player \(i\) cannot tell \(h\) and \(h'\) apart; this implies \(A(h) = A(h')\), so we use the overloaded notation \(A(I) := A(h) = A(h')\) for simplicity.
\end{itemize}
\end{definition}

\subsection{Strategies and Solution Concepts for IIGs}

In an IIG, each player \(i \in \mathcal{N}\) employs a \textbf{(behavioral) strategy} \(\sigma_i\), drawn from the set of all possible strategies \(\Sigma_i\). Such a strategy specifies, for each of the player’s infosets, a probability distribution over the available actions. Formally, \(\sigma_i = \{ \sigma_i(I, \cdot) \mid I \in \mathcal{I}_i \}\), where \(\sigma_i(I, \cdot): A(I) \to [0,1]\) satisfies normalization condition (i.e. \(\sum_{a \in A(I)} \sigma_i(I, a) = 1\)) for every \(I \in \mathcal{I}_i\). When all rational players choose their strategies, they form a strategy profile \(\sigma = (\sigma_1, \ldots, \sigma_N)\), with the set of all such profiles denoted \(\Sigma = \times_{i \in \mathcal{N}} \Sigma_i\).

To compute payoffs under \(\sigma \in \Sigma\), we first define the probability of reaching each history. For any \(h \in H\), \(\pi_c(h) = \prod\nolimits_{h' \sqsubset h, h' \cdot a \sqsubseteq h, \rho(h') = c} \zeta(h', a)\) captures nature’s contribution to reaching \(h\), where \(h' \sqsubseteq h\) includes \(h' = h\) (extending \(h' \sqsubset h\)). For rational player \(i\in \mathcal{N}\), \(\pi_i^\sigma(h) = \prod\nolimits_{h' \sqsubset h, h' \cdot a \sqsubseteq h, \rho(h') = i} \sigma_i(I[h'], a)\) reflects their strategy’s contribution, where \(I[h'] \in \mathcal{I}_i\) is the infoset containing \(h'\). Since nature’s choices are independent of player strategies, \(\pi_c(h)\) is invariant to \(\sigma\); thus, when \(\sigma\) is given, we make no distinction between \(\pi_c^\sigma\) and \(\pi_c\) (omitting \(\sigma\) from the notation). The total reaching probability is \(\pi^\sigma(h) = \prod_{i \in \mathcal{N}_c} \pi_i^\sigma(h)\), and player \(i\)’s expected payoff under \(\sigma\) is \(u_i(\sigma) = \sum_{z \in Z} \pi^\sigma(z) \cdot u_i(z)\).

An IIG has \textbf{perfect-recall} if every player \(i \in \mathcal{N}\) retains all past actions and observations—no information is forgotten. Otherwise, it has \textbf{imperfect-recall}. Under perfect-recall, a strategy profile \(\sigma^* = (\sigma_1^*, \ldots, \sigma_N^*)\) is a \textbf{Nash equilibrium} if, for all \(i \in \mathcal{N}\) and \(\sigma_i \in \Sigma_i\), \(u_i(\sigma^*) \geq u_i(\sigma_i, \sigma_{-i}^*)\) (where \(\sigma_{-i}^*\) excludes \(i\)’s strategy). Every finite perfect-recall IIG has at least one such equilibrium.

For two-player zero-sum IIGs (\(N=2\), \(u_1(z) = -u_2(z)\) for all \(z \in Z\)) with perfect-recall, \textbf{exploitability} \(\epsilon(\sigma) = \frac{\epsilon_1(\sigma)+\epsilon_2(\sigma)}{2}\) measures how far \(\sigma = (\sigma_1, \sigma_2)\) is from a Nash equilibrium \(\sigma^*\), where
\[\left\{
\begin{aligned}
\epsilon_1(\sigma) &= u_1(\sigma^*) - \min_{\sigma'_2 \in \Sigma_2} u_1(\sigma_1, \sigma'_2), \\
\epsilon_2(\sigma) &= u_2(\sigma^*) - \min_{\sigma'_1 \in \Sigma_1} u_2(\sigma'_1, \sigma_2)
\end{aligned}
\right.\]
and \(\epsilon(\sigma) \leq \varepsilon\) defines an \(\varepsilon\)-Nash equilibrium.

\subsection{Signal Observation Ordered Games (SOOGs)}
Traditional IIGs merge chance event (hand dealing) and rational players’ action sequences into monolithic infosets, precluding independent analysis of these two core dimensions—critical for hand abstraction. SOOGs resolve this by imposing constraints on IIGs to explicitly model the structure of hold'em games.

First, history operators are defined to separate chance and non-chance components:

\begin{itemize}[left=0cm]
    \item \textbf{Trace extraction operator} \(\mathcal{H}_{-i}(\cdot)\): Retains actions of all players except \(i\) in a history, replacing \(i\)'s actions with wildcard \(\emptyset_i\). For example, if \(h = a_i^1 \cdot a_j^1 \cdot a_k^1 \cdot a_i^2\) \footnote{In practice, the full history is \(h = h^o \cdot a_i^1 \cdot a_j^1 \cdot a_k^1 \cdot a_i^2\), where the initial history \(h^o\) is omitted; here, \(a_i^1\) and \(a_i^2\) denote player i's actions}, then \(\mathcal{H}_{-i}(h) = \emptyset_i \cdot a_j^1 \cdot a_k^1 \cdot \emptyset_i\).
    
    \item \textbf{Sequence extraction operator} \(\tilde{\mathcal{H}}_{-i}(\cdot)\): Defined as \(\tilde{\mathcal{H}}_{-i}(\cdot) := \mathcal{E} \circ \mathcal{H}_{-i}\), where \(\mathcal{E}\) (wildcard elimination operator) removes all wildcards. For the above \(h\), \[\tilde{\mathcal{H}}_{-i}(h) = \mathcal{E}(\emptyset_i \cdot a_j^1 \cdot a_k^1 \cdot \emptyset_i)= a_j^1 \cdot a_k^1.\]
    
    \item \(\mathcal{H}_{i}(\cdot)\) (trace extraction for \(i\)): Retains \(i\)'s actions with others replaced by wildcards, defined as \(\mathcal{H}_{i}(\cdot) := \mathcal{E} \circ \bigcirc_{\substack{j \in \mathcal{N}_c \\ j \neq i}} \mathcal{H}_{-j}\) ( \(\bigcirc\) denotes sequential composition: \(\bigcirc_{k=1}^N f_k := f_N \circ \dots \circ f_1\) ). For the above \(h\), \(\mathcal{H}_{i}(h) = a_i^1 \cdot \emptyset_j \cdot \emptyset_k \cdot a_i^2\).
    
    \item \(\tilde{\mathcal{H}}_{i}(\cdot)\) (sequence extraction for \(i\)): Eliminates wildcards from \(\mathcal{H}_{i}(h)\) via \(\mathcal{E}\), i.e., \(\tilde{\mathcal{H}}_{i}(\cdot) := \mathcal{E} \circ \mathcal{H}_{i}\). For the above \(h\), \(\tilde{\mathcal{H}}_{i}(h) = a_i^1 \cdot a_i^2\).
\end{itemize}

A critical property of the trace extraction operator is \textbf{trace complementarity}: Two traces are complementary if, at every position, one contains a concrete action of player $i\in \mathcal{N}_c$ and the other contains \(\emptyset_i\) (no overlapping actions or simultaneous wildcards). For any history \(h\in H\), \(\mathcal{H}_i(h)\) and \(\mathcal{H}_{-i}(h)\) satisfy this property. Complementary traces can be spliced into a formal history via \(h = \mathcal{H}_i(h) \oplus \mathcal{H}_{-i}(h)\), where \(\oplus\) resolves positions by prioritizing concrete actions over wildcards (the spliced history is not guaranteed to be legal).

\begin{definition}[Signal Observation Ordered Game (SOOG)] \label{def:soog}
A signal observation ordered game is an IIG \(\mathcal{G}\) (satisfying Definition~\ref{def:iig}) with the following additional constraints:
\begin{itemize}[left=0cm]
    \item \(\gamma: H \mapsto \mathbb{N}^+\): A function that partitions histories into phases. For any history \(h \in H\), \(\gamma(h)\) corresponds to the count of chance histories (i.e., histories where nature acts) along the path from the initial history \(h^o\) to \(h\) (including \(h\)), thus determining the phase of \(h\). Let \(\Gamma = \max_{h \in H} \gamma(h)\) represent the final phase; it is noteworthy that \(\gamma(h^o) = 1\), meaning \(h^o\) is a chance history.
    
    \item Chance actions reveal signals: Let \(\Theta\) denote the total set of signals, defined as \(\Theta = \{\tilde{\mathcal{H}}_c(h) \mid h \in H\}\). The behavior of chance actions depends solely on the signals already revealed: for any two chance histories \(h, h' \in H_c\), if \(\tilde{\mathcal{H}}_c(h) = \tilde{\mathcal{H}}_c(h')\), then their available actions are identical (\(A(h) = A(h')\)) and their action probability distributions coincide (\(\zeta(h, a) = \zeta(h', a)\) for all \(a \in A(h)\)). Given that \(\zeta\) at chance histories is only related to previously dealt signals, and all legal actions \(a\) at such histories serve to reveal new signals, we can replace \(\zeta(h, a)\) with a more targeted function \(\xi(\theta, \theta')\)—where \(\theta \in \Theta\) is the signal revealed at the current chance history, and \(\theta' \in \Theta\) is the new signal uncovered by action \(a\).
    
    % \item Signal-action separability: For any three histories \(h_1, h_2, h_1' \in H\), if the chance signal sequences they contain are identical (\(\tilde{\mathcal{H}}_c(h_1) = \tilde{\mathcal{H}}_c(h_1')\)) and their non-chance action traces are the same (\(\mathcal{H}_{-c}(h_1) = \mathcal{H}_{-c}(h_2)\)), then the history formed by splicing \(\mathcal{H}_{-c}(h_1')\) and \(\mathcal{H}_c(h_2)\)—denoted \(h_2' = \mathcal{H}_{-c}(h_1') \oplus \mathcal{H}_c(h_2)\)—must be an legal history of \(H\).
    
    \item Signal-action separability: For any three histories \(h_1, h_2, h_1' \in H\), if two conditions are satisfied: (1) \(h_1\) and \(h_1'\) have identical signals (i.e., identical chance action sequences), corresponding to \(\tilde{\mathcal{H}}_c(h_1) = \tilde{\mathcal{H}}_c(h_1')\); and (2) \(h_1\) and \(h_2\) share the same non-chance action traces, as denoted by \(\mathcal{H}_{-c}(h_1) = \mathcal{H}_{-c}(h_2)\)—then there must exist a history \(h_2' \in H\) such that \(\tilde{\mathcal{H}}_c(h_2') = \tilde{\mathcal{H}}_c(h_2)\) and \(\mathcal{H}_{-c}(h_2') = \mathcal{H}_{-c}(h_1')\).
    
    \item Signal observation partitions: Let \(\Psi = (\Psi_i)_{i \in \mathcal{N}}\), where each \(\Psi_i\) (referred to as signal observation infosets for player \(i\)) constitutes a partition of \(\Theta\). Let \(\vartheta = (\vartheta_i)_{i \in \mathcal{N}}\) be a tuple of observation functions, with each \(\vartheta_i: \Theta \mapsto \Psi_i\) mapping a signal \(\theta \in \Theta\) to its associated signal observation infoset \(\psi \in \Psi_i\). Signals falling within the same \(\psi \in \Psi_i\) cannot be distinguished by player \(i\).
    
    \item Phase-specific subsets: For each phase \(r \in \{1, \dots, \Gamma\}\), we define the following subsets:
\(H^{(r)} = \{h \in H \mid \gamma(h) = r\}\) (histories in phase \(r\)), 
\(H_i^{(r)} = H^{(r)} \cap H_i\) (histories in phase \(r\) where player \(i\) acts), 
\(Z^{(r)} = Z \cap H^{(r)}\) (terminal histories in phase \(r\)), 
\(\Theta^{(r)} = \{\tilde{\mathcal{H}}_c(h) \mid h \in H^{(r)}\}\) (signals in phase \(r\)), 
\(\Psi_i^{(r)} = \{ \psi \cap \Theta^{(r)} \mid \psi \in \Psi_i \}\) (signal observation infosets of player \(i\) in phase \(r\)), 
\(\Psi^{(r)} = (\Psi_i^{(r)})_{i \in \mathcal{N}}\) (signal observation partitions in phase \(r\)), 
\(\mathcal{I}_i^{(r)} = \{I \in \mathcal{I}_i \mid I \subseteq H^{(r)}\}\) (infosets of player \(i\) in phase \(r\)), 
and \(\mathcal{I}^{(r)} = (\mathcal{I}_i^{(r)})_{i \in \mathcal{N}}\) (information partitions in phase \(r\)).
    
    \item Criterion for history indistinguishability: For any two histories \(h, h' \in H_i\) (where \(i \in \mathcal{N}\)), \(h\) and \(h'\) belong to the same infoset \(I \in \mathcal{I}_i\) exactly when two conditions are met: their non-chance action traces are identical (\(\mathcal{H}_{-c}(h) = \mathcal{H}_{-c}(h')\)) and the signals they contain are indistinguishable to \(i\) (\(\vartheta_i(\tilde{\mathcal{H}}_c(h)) = \vartheta_i(\tilde{\mathcal{H}}_c(h'))\)).
    
    \item Survival functions: \(\omega = (\omega_i)_{i \in \mathcal{N}}\) is a tuple of survival functions. For each player \(i \in \mathcal{N}\) and history \(h \in H\), 
    \[
    \omega_i(h) = \mathbb{I}_{\{\text{player } i \text{ is still participating at } h\}},
    \]
    where \(\mathbb{I}_{\{\cdot\}}\) is the indicator function (taking the value 1 if the condition is satisfied, and 0 otherwise).
    
    \item Terminal order and utility consistency: In the final phase \(\Gamma\), each signal \(\theta \in \Theta^{(\Gamma)}\) (referred to as final signals) induces a total order \(\preceq_\theta\) over the set \(\mathcal{N}\). This order satisfies reflexivity (for all \(i\), \(i \preceq_\theta i\)), totality (for any \(i, j \in \mathcal{N}\), either \(i \preceq_\theta j\) or \(j \preceq_\theta i\)), and transitivity (if \(i \preceq_\theta j\) and \(j \preceq_\theta k\), then \(i \preceq_\theta k\)). For any terminal history \(z \in Z^{(\Gamma)}\) with \(\theta = \tilde{\mathcal{H}}_c(z)\), if both \(i\) and \(j\) are still participating at \(z\) (i.e., \(\omega_i(z)\omega_j(z) = 1\)) and \(i \preceq_\theta j\), then \(u_i(z) \leq u_j(z)\).
\end{itemize}
\end{definition}

\subsection{Signal Observation Abstraction and Resolution Bound}

The SOOG framework is leveraged to model the hand abstraction task in hold'em games as a signal observation abstraction model, and the resolution bound is introduced as a tool for evaluating the quality of abstraction algorithms without reliance on strategy solving.

Formally, a \textbf{signal observation abstraction profile} is denoted \(\alpha = (\alpha_1, \dots, \alpha_N)\): for each player \(i \in \mathcal{N}\), \(\Psi^{\alpha}_i\) represents the set of abstracted signal observation infosets, and \(\alpha_i: \Theta \mapsto \Psi^{\alpha}_i\) maps each original signal \(\theta \in \Theta\) to an abstracted signal observation infoset \(\psi^\alpha \in \Psi^{\alpha}_i\), where every \(\psi^\alpha\) can be split into signal observation infosets from the original \(\Psi_i\). A \(\textbf{(signal observation) abstracted game}\) \(\mathcal{G}^\alpha\) is produced by substituting the original observation function \(\vartheta_i\) with \(\alpha_i\) in a SOOG \(\mathcal{G}\); this substitution modifies the information structure of the game (not the game itself). The abstracted infoset partition \(\mathcal{I}_i^\alpha\) for player \(i\) follows this rule: two original infosets \(I, I' \in \mathcal{I}_i\) belong to the same abstracted infoset \(I^\alpha \in \mathcal{I}_i^\alpha\) if and only if histories \(h \in I\) and \(h' \in I'\) exist such that \(\alpha_i(\tilde{\mathcal{H}}_c(h)) = \alpha_i(\tilde{\mathcal{H}}_c(h'))\) (indistinguishable abstracted observation of signals) and \(\mathcal{H}_{-c}(h) = \mathcal{H}_{-c}(h')\) (identical non-chance action traces). Notably, \(\mathcal{I}_i^\alpha\) forms a partition of \(\mathcal{I}_i\), meaning every original infoset \(I \in \mathcal{I}_i\) is a subset of exactly one abstracted infoset \(I^\alpha\).

In an abstracted \(\mathcal{G}^\alpha\), each rational player \(i\in \mathcal{N}\) adopts an \(\textbf{abstracted strategy}\) \(\sigma^\alpha_i \in \Sigma^\alpha_i\), where \(\sigma^\alpha_i = \{ \sigma^\alpha_i(I^\alpha, \cdot) \mid I^\alpha \in \mathcal{I}_i^\alpha \}\) and \(\sigma^\alpha_i(I^\alpha, \cdot): A(I^\alpha) \to [0,1]\) is a probability distribution over legal actions—with \(A(I^\alpha) = A(I)\) for all original infosets \(I \subset I^\alpha\), as legal actions are consistent across merged infosets. When all rational players select their abstracted strategies, an abstracted strategy profile \(\sigma^\alpha = (\sigma^\alpha_1, \ldots, \sigma^\alpha_N)\) is formed, and the set of all such profiles is denoted \(\Sigma^\alpha = \times_{i \in \mathcal{N}} \Sigma^\alpha_i\). A key property of these abstracted strategies is that any \(\sigma^\alpha_i \in \Sigma^\alpha_i\) can induce a corresponding original strategy \(\sigma_i \in \Sigma_i\) in \(\mathcal{G}\): for every original infoset \(I \subset I^\alpha\), the action probability distribution of \(\sigma_i\) at \(I\) matches that of \(\sigma^\alpha_i\) at \(I^\alpha\), i.e., \(\sigma_i(I, a) = \sigma^\alpha_i(I^\alpha, a)\) for all \(a \in A(I)\). An abstraction \(\alpha\) is said to have \(\textbf{perfect-recall}\) if the induced abstracted game \(\mathcal{G}^\alpha\) has perfect-recall; otherwise, it is referred to as an \(\textbf{imperfect-recall}\) abstraction.

To evaluate abstraction quality without solving strategies, the concept of \(\textbf{refinement}\) between abstractions is first relied on. For a rational player \(i\), given two signal observation abstractions \(\alpha_i\) and \(\beta_i\), \(\alpha_i\) is said to refine \(\beta_i\) (denoted \(\alpha_i \sqsupseteq \beta_i\)) if every abstracted signal observation infoset \(\psi^\beta \in \Psi_i^{\beta}\) can be partitioned into one or more abstracted signal observation infosets from \(\Psi_i^{\alpha}\); if \(\alpha_i\) refines multiple abstractions \(\alpha_i^1, \dots, \alpha_i^m\) for \(i\), it is called a \(\textbf{common refinement}\) of these abstractions. As proven and intuited by \citet{waugh2009abstraction, fu2025beyond}, more refined (i.e., more granular) abstractions tend to facilitate deriving better-performing solutions when solving the abstracted game.

This refinement concept is extended to evaluating automated abstraction algorithms via the \(\textbf{resolution bound}\)—a measure quantifying the maximum ability of an algorithm to distinguish signal observation infosets. For a given SOOG \(\mathcal{G}\) and rational player \(i\), if a signal observation abstraction \(\alpha_i\) exists such that every abstraction \(\alpha_i'\) generated by the algorithm (across all parameter sets) satisfies \(\alpha_i \sqsupseteq \alpha_i'\) (i.e., \(\alpha_i\) is a common refinement of all the algorithm’s generated abstractions), then \(\alpha_i\) is the algorithm’s resolution bound (strictly defined within the context of \(\mathcal{G}\) and \(i\)). Building on the earlier intuition about refinement, high-quality solutions from the abstracted game induced by an algorithm’s resolution bound tend to outperform those from the abstracted games of the algorithm’s generated abstractions—though this advantage is not strictly guaranteed~\cite{waugh2009abstraction}. Importantly, a finer resolution bound does not inherently ensure better algorithm performance; however, a coarse resolution bound indicates inherent flaws in the algorithm’s ability to distinguish signal observation infosets.

\section{Methods}

The resolution bound functions not only as an evaluation metric for signal observation abstraction algorithms but also as a guideline for their development. In this section, we first derive two signal observation abstraction variants based on winrate calculations: potential-aware winrate isomorphism (PAWI) and k-recall winrate isomorphism (k-RWI). PAWI is a type of signal observation abstraction that discards historical information entirely, and we demonstrate that it acts as the resolution bound for the previously proposed EHS algorithm~\cite{gilpin2007better}. In contrast, k-RWI is a signal observation abstraction that integrates historical information; we further develop the signal observation abstraction algorithm KrwEmd, with k-RWI designated as its resolution bound.

\subsection{Potential-Aware Winrate Isomorphism}

We begin by constructing a flawed signal observation abstraction—one with more pronounced limitations than PAOI \cite{fu2025beyond}. This flawed abstraction groups signal observation infosets into the same category if they produce identical winrates when rollout simulations are performed from the current state.

To support subsequent discussions, we first define the following notations:
\begin{equation}
S_\psi^{(r')} := \left\{ \theta \in \Theta^{(r')} \mid \exists \theta' \in \psi,\ \theta' \sqsubseteq \theta \right\}, \label{eq:extended-signals}
\end{equation}
where \(\psi \subseteq \Psi_i^{(r)}\) (a phase-\(r\) signal observation infoset for a some rational player \(i\)),  \(\theta' \sqsubseteq \theta\) signifies that \(\theta'\) is a prefix of \(\theta\) (or equal), and \(S_\psi^{(r')}\)denotes the set of all phase-\(r'\) signals, \(r' \geq r\),  that extend any signal included in \(\psi\). Additionally, we extend the definition of nature’s reaching probability \(\pi_c(h)\)—originally formulated for histories \(h\)—to be applicable to signals \(\theta\): specifically, let \(\pi_c(\theta) = \prod\nolimits_{\theta' \sqsubseteq \theta} \xi(\theta'', \theta')\), where \(\theta''\) denotes the immediate predecessor of \(\theta'\).

On this basis, we introduce the \textbf{potential-aware winrate feature (PAWF)} for signal observation infosets. Formally, for any \(\psi \in \Psi_i^{(r)}\) in phase \(r\), the PAWF of player \(i\) is defined as:  
\[
pw_i^{(r)}(\psi) = \big( pw_i^{(r),0}(\psi), pw_i^{(r),1}(\psi), \ldots, pw_i^{(r),N}(\psi) \big), \label{eq:pawf}
\]  
where \(\mathcal{N}_{-i} = \mathcal{N} \setminus \{i\}\), and each component of \(pw_i^{(r)}(\psi)\) is defined as follows:  
\begin{itemize}[left=0cm]
    \item \( pw_i^{(r),0}(\psi) \): Represents the probability that player \(i\) ranks lower than at least one other player in the final phase, i.e.,  
      \begin{align}
      pw_i^{(r),0}&(\psi) = \\
      &\frac{\sum\limits_{\theta \in S_\psi^{(\Gamma)}} \pi_c(\theta)\cdot\mathbb{I}_{\left\{ \exists j \in \mathcal{N}_{-i} \text{ s.t. } j \not\preceq_\theta i \text{ and } i \preceq_\theta j \right\}}}{\sum\limits_{\theta \in S_\psi^{(\Gamma)}} \pi_c(\theta)}.
      \end{align}

    \item For \( l \ge 0 \), \( pw_i^{(r),l+1}(\psi) \): Represents the probability that player \(i\) ranks no lower than any other player and ranks higher than exactly \( l \) other players in the final phase, i.e.,  
     \begin{align}
      p&w_i^{(r),l+1}(\psi) = \label{eq:woirl} \\
      &\frac{\sum\limits_{\theta \in S_\psi^{(\Gamma)}} \pi_c(\theta)\cdot\mathbb{I}_{\left\{ \forall j \in \mathcal{N}_{-i}, j \preceq_\theta i \text{ and } \vert \{ j \in \mathcal{N}_{-i} \mid i \preceq_\theta j \} \vert = l \right\}}}{\sum\limits_{\theta \in S_\psi^{(\Gamma)}} \pi_c(\theta)}.\nonumber
      \end{align}
\end{itemize}  

Here, \(\vert \{ \cdot \} \vert\) denotes the cardinality of a set (i.e., the number of elements within the set). 

Notably, in two-player scenarios, only \(pw_i^{(r),0}(\psi)\), \(pw_i^{(r),1}(\psi)\), and \(pw_i^{(r),2}(\psi)\) remain. In hold'em games, these three components correspond respectively to the losing rate, tying rate, and winning rate obtained when enumerating all possible opponent hands against a player’s own hand. It is straightforward to verify that \(\sum_{l=0}^N pw_i^{(r), l}(\psi) = 1\) for each signal observation infoset $\psi$.

Signal observation infosets in the same phase with identical PAWFs form equivalence classes collectively termed \textbf{potential-aware winrate isomorphisms (PAWIs)} — essentially, PAWI partitions such signal observation infosets by matching PAWFs. Since PAWF construction only considers current and future information, PAWI is an extreme imperfect-recall abstraction that entirely discards historical data. Unsurprisingly, like PAOI, it induces excessive abstraction: In late game phases, the number of identifiable equivalence classes of signal observation infosets decreases, contrasting with the lossless isomorphism (LI)~\cite{gilpin2007lossless} abstraction results in HUNL (Table \ref{tab:poi-pawi}), where their equivalence class count increases with each phase. PAWI is even coarser than PAOI, relying solely on weighted winning/tying/losing rates rather than full distributions — a distinction reflected in their data requirements (Table \ref{tab:poi-pawi}): in HUNL, PAWI needs only 3 pieces per phase, while PAOI must enumerate card distributions in non-final phases to capture more complex distributions. This greater coarseness leads to PAWI identifying significantly fewer abstracted signal observation infosets than PAOI; for instance, in the Turn phase, PAWI’s count is only 79.16\% of PAOI’s.

\begin{table}[ht]
  \centering
\caption{The number of signal observation infoset equivalence classes identified by LI, PAWI, and PAOI in each phase of HUNL, where W/O denotes the ratio of such classes identified by PAWI to those by PAOI. Note that the data volume required to identify an equivalence class differs between the two methods: PAWI relies on winning/tying/losing rates, while PAOI needs to enumerate the card distribution of the next phase in non-final phases.}
\resizebox{\columnwidth}{!}{%
\begin{tabular}{cccccc}
\specialrule{1.2pt}{0pt}{0pt}
& & Preflop & Flop & Turn & River \\ \midrule
\multicolumn{1}{c}{\multirow{3}{*}{Counts}} 
& LI & 169 & 1286792 & 55190538 & 2428287420 \\
& PAWI & 169 & 1028325 & 1850624 & 20687 \\
& PAOI & 169 & 1137132 & 2337912 & 20687 \\ 
& W/O (\%) & 100.0 & 90.43 & 79.16 & 100.0 \\ \midrule
\multicolumn{1}{c}{\multirow{2}{*}{\makecell{Data\\Volume}}} 
& PAWI & 3 & 3 & 3 & 3 \\
& PAOI & $\binom{50}{3}$ & $\binom{47}{1}$ & $\binom{46}{1}$ & 3 \\ \bottomrule[1.2pt]
\end{tabular}%
}
\label{tab:poi-pawi}
\end{table}

\begin{table*}[ht]
\centering
\caption{The number of signal observation infoset equivalence classes identified by k-RWI, and k-ROI in each phase and $k$ of HUNL, with W/O indicating the ratio of signal observation infoset equivalence classes identified by k-RWI to those identified by k-ROI.}
\resizebox{\textwidth}{!}{%
\begin{tabular}{ccccccccccc}
% \toprule[1.2pt]
\Xhline{1.2pt} \hline
         & Preflop & \multicolumn{2}{c}{Flop} & \multicolumn{3}{c}{Turn}     & \multicolumn{4}{c}{River}                \\
         \cmidrule(lr){2-2} \cmidrule(lr){3-4} \cmidrule(lr){5-7} \cmidrule(lr){8-11}
Recall (k)   & 0       & 0           & 1           & 0       & 1        & 2        & 0     & 1        & 2         & 3         \\ %\midrule
k-RWI     & 169     & 1028325     & 1123442     & 1850624 & 34845952 & 37659309 & 20687 & 33117469 & 529890863 & 577366243 \\
k-ROI     & 169     & 1137132     & 1241210     & 2337912 & 38938975 & 42040233 & 20687 & 39792212 & 586622784 & 638585633 \\
W/O (\%) & 100.0   & 90.43       & 90.51       & 79.16   & 89.49    & 89.58    & 100.0 & 83.23    & 90.33     & 90.41     \\ \hline \bottomrule[1.2pt]
\end{tabular}%
}
\label{tbl:krwi-kroi-hunl}
\end{table*}

Though PAWI is coarser than PAOI, we still arrive at the following conclusion:
\begin{proposition} \label{prop:less-ehs}
For a two-player SOOG $\mathcal{G}$, the PAOI abstraction serves as a resolution bound of algorithm EHS.
\end{proposition}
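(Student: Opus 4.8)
The plan is to unpack the definition of a resolution bound and reduce the claim to a single refinement statement: that PAOI refines every abstraction the EHS algorithm can output. By the definition recalled in the excerpt, PAOI is a resolution bound of EHS precisely when $\text{PAOI} \sqsupseteq \alpha'$ for every abstraction $\alpha'$ generated by EHS across all its parameter settings; equivalently, whenever two signal observation infosets $\psi, \psi' \in \Psi_i^{(r)}$ lie in the same PAOI class, they are grouped together by every EHS abstraction. So the entire proof collapses to showing that PAOI-equivalence forces agreement on whatever scalar EHS buckets on.

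First I would formalize what EHS does. The EHS algorithm assigns each infoset $\psi$ its expected hand strength — in the two-player setting the scalar $\mathrm{ehs}_i^{(r)}(\psi) = pw_i^{(r),2}(\psi) + \tfrac{1}{2}\,pw_i^{(r),1}(\psi)$ (winning rate plus half the tying rate) — and then partitions infosets by bucketing these scalars. Since no bucketing of a scalar can ever separate two infosets sharing the \emph{same} scalar value, every abstraction $\alpha'$ produced by EHS is a coarsening of the group-by-exact-EHS-value partition $\alpha^{\ast}_{\mathrm{EHS}}$; that is, $\alpha^{\ast}_{\mathrm{EHS}} \sqsupseteq \alpha'$ for all generated $\alpha'$. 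By transitivity of $\sqsupseteq$, it therefore suffices to prove the single statement $\text{PAOI} \sqsupseteq \alpha^{\ast}_{\mathrm{EHS}}$, namely that PAOI-equivalent infosets share the same EHS value.

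The core step is to show $\psi \sim_{\mathrm{PAOI}} \psi' \Rightarrow \mathrm{ehs}_i^{(r)}(\psi) = \mathrm{ehs}_i^{(r)}(\psi')$, which I would route through PAWI. PAOI preserves the full (recursively defined) distribution over terminal outcomes reachable from an infoset, whereas the winning/tying/losing rates $pw_i^{(r),2}, pw_i^{(r),1}, pw_i^{(r),0}$ are merely $\pi_c$-weighted marginals of that same terminal distribution over $S_\psi^{(\Gamma)}$. Hence PAOI-equivalence implies identical PAWFs, i.e. $\text{PAOI} \sqsupseteq \text{PAWI}$ — exactly the refinement reflected numerically in Table~\ref{tab:poi-pawi}, where PAWI's class counts are a strict fraction of PAOI's. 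Because $\mathrm{ehs}_i^{(r)}$ is a fixed deterministic function of the PAWF components, equal PAWFs yield equal EHS values, completing the chain $\text{PAOI} \sqsupseteq \text{PAWI} \sqsupseteq \alpha^{\ast}_{\mathrm{EHS}} \sqsupseteq \alpha'$ for every generated $\alpha'$, which is the definition of PAOI being a resolution bound of EHS.

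The main obstacle I anticipate is the rigorous justification of $\text{PAOI} \sqsupseteq \text{PAWI}$ — that PAOI-equivalence genuinely determines the terminal win/tie/loss rates. This requires invoking the recursive, outcome-distribution characterization of PAOI from \citet{fu2025beyond} and verifying that the final-phase order $\preceq_\theta$ (which defines the indicator events inside $pw_i^{(r),0}$ and $pw_i^{(r),l+1}$) is a measurable function of the terminal outcomes that PAOI tracks; everything else — the reduction to a refinement statement and the deterministic dependence of EHS on the PAWF — is routine. I would also take care to confirm that EHS's discriminative ceiling is indeed the scalar expected hand strength and nothing finer, so that $\alpha^{\ast}_{\mathrm{EHS}}$ really is the finest EHS output and thus dominates every generated abstraction.
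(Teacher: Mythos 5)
Your proposal is correct and takes essentially the same route as the paper: the paper's proof likewise characterizes EHS as clustering on a single scalar derived from the PAWF and observes that PAOI-equivalent infosets share identical $pw_i^{(r)}(\psi)$, hence identical scalars, hence the same cluster. The only substantive discrepancy is the scalar itself---you use $pw_i^{(r),2}+\tfrac{1}{2}pw_i^{(r),1}$ while the paper's proof uses the equity $pw_i^{(r),2}-pw_i^{(r),0}$---but since the three rates sum to $1$ these are affine transforms of each other and the argument is unaffected; your explicit flagging of the unproven step (that PAOI-equivalence determines the PAWF) is in fact more careful than the paper, which asserts it without justification.
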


\begin{proof}:

   \textbf{Logic of the EHS algorithm:} For a signal observation infoset $\psi \in \Psi_i^{(r)}$ of player $i$ at phase $r$, calculate its equity as $\text{equity}(\psi) = pw_i^{(r),2}(\psi) - pw_i^{(r),0}(\psi)$. Then perform k-means clustering on all signal observation infosets in $\Psi_i^{(r)}$ to determine the abstracted signal observation infosets, where the equity of the cluster centroid is the average equity of all signal observation infosets in the cluster, and the deviation between the centroid and a signal observation infoset is the difference between the centroid’s equity and the signal observation infoset’s equity. 
   
   \textbf{Key inference:} Any two signal observation infosets $\psi, \psi'$ belonging to the same PAOI equivalence class have identical $pw_i^{(r)}(\psi)$ (and thus identical equity). Therefore, they will be assigned to the same centroid and belong to the same abstracted signal observation infoset after k-means clustering.
\end{proof}

Despite Proposition~\ref{prop:less-ehs}, a similar conclusion does not hold for the PAAEMD algorithm (the current SOTA), as PAWI fails to retain distributional information—this information is, however, a critical factor that PAAEMD accounts for when partitioning abstracted signal observation infosets.

\subsection{K-Recall Winrate Isomorphism}

Inspired by k-ROI\cite{fu2025beyond} (the general form of FROI), we can integrate historical information into PAWI to construct winrate-based equivalence classes, namely k-recall winrate isomorphism (k-RWI)—where ``k-recall" denotes recalling information from the previous k phases. 

For a SOOG \(\mathcal{G}\) with perfect-recall, consider rational player $i$ at phase $r$ and its signal observation infoset \(\psi \in \Psi_i^{(r)}\): All signals in \(\psi\) have predecessors at phases \(r' < r\), and these predecessors belong to a unique signal observation infoset denoted \(\psi^{(r')}\).

We define the \textbf{k-recall winrate feature (k-RWF)} of \(\psi\) (with \(k < r\)) as a \((k+1)(N+1)\)-dimensional numerical array, formally given by:
\begin{align}
r&w_i^{(r,k)}(\psi) = \\
&\big( pw^{(r)}(\psi^{(r)}), pw^{(r-1)}(\psi^{(r-1)}), \ldots, pw^{(r-k)}(\psi^{(r-k)}) \big), \nonumber
\end{align}
where \(\psi^{(r-t)}\) (for \(t = 0, 1, \ldots, k\)) is the unique predecessor signal observation infoset of \(\psi\) at phase \(r-t\), and \(pw^{(r-t)}(\psi^{(r-t)})\) is the PAWF of \(\psi^{(r-t)}\) at phase \(r-t\). Similarly, k-RWI groups signal observation infosets into the same equivalence class if they share identical k-RWFs.

Just as PAWF is a simplification of PAOF, k-RWF is also a simplification of k-ROF. Moreover, it is evident that PAWI is a degenerate case of k-RWI (specifically, k-RWI when \(k=0\), i.e., 0-RWI). Table~\ref{tbl:krwi-kroi-hunl} presents the number of signal observation infoset equivalence classes identified by k-RWI and k-ROI, along with their ratio in HUNL. Notably, while the resolution ratio of PAWI to PAOI can drop below 80\%, when $k$ is set to its maximum value (\(k=r-1\) for phase $r$), the ratio of k-RWI to k-ROI reaches a minimum of nearly 90\%—retaining most of the information. Additionally, k-RWI (\(k>0\)) identifies far more signal observation infoset equivalence classes than PAOI—which serves as the resolution bound for mainstream signal observation abstraction algorithms~\cite{fu2025beyond} (e.g., EHS and the state-of-the-art PAAEMD) that entirely discard historical information.

\subsection{K-Recall Winrate Abstraction with Earth Mover's Distance} \label{sec:KrwEmd}

% \citet{fu2025beyond} 构建了 potential 和 k-recall feature，我们将它们称为outcome-based feature，用来识别不同的抽象信号信息集。在上一节中，我们构建了 potential 和 k-recall winrate feature，我们将他们称为winrate-based feature，用于区分不同的抽象信息集。这种方法的特点是，每个不同的 feature 唯一对应一个抽象信号信息集。从直觉上讲，我们可以认为feature的相似性可以反映abstracted signal observation infoset的相似性，这样我们就可以将这些abstracted signal observation infoset进行进一步抽象，以便能够应用于大规模游戏。而 outcome-based feature 很难评估 feature 的相似性，因为 outcome-based feature 中的识别码只能反映类别，难以反映相似程度。相比之下，winrate-based feature 表示的是胜率，而胜率是可比的，因此我们可以轻松地定义两个 winrate-based feature 之间的距离。

The advantage of introducing k-RWI lies in that the vector elements of k-RWF are comparable quantities with distinguishable superiority/inferiority—unlike k-ROF, whose elements are far harder to compare in terms of quality. This enables us to further design signal observation abstraction algorithms based on k-RWF through clustering.

For the signal observation infosets $\psi$ and $\psi'$ of player $i$ at phase $r$, we can define the distance of their k-RWF as
\begin{align}
D&(rw_i^{(r,k)}(\psi), rw_i^{(r,k)}(\psi')) \label{equa:distance-of-krwemd} \\
= &\sum_{j=0}^k w_j \cdot \text{Emd}(pw_i^{(r-j)}(\psi^{(r-j)}), pw_i^{(r-j)}(\psi'^{(r-j)})). \nonumber 
\end{align}

Among Equation \eqref{equa:distance-of-krwemd}, \(\text{Emd}(\cdot, \cdot)\) is the operator used to calculate the earth mover's distance. The earth mover's distance can be formulated as a linear programming problem. Given two distributions $\mathbf{p} = (p_1, p_2, \dots, p_n)$ and $\mathbf{q} = (q_1, q_2, \dots, q_m)$ over two sets of points, and a distane matrix $\mathbf{D} = [d_{ij}]_{n\times m}$ representing the ground distances between each point in $\mathbf{p}$ and $\mathbf{q}$, the goal is to find the optimal flow $\mathbf{F} = [f_{ij}]_{n\times m}$ that minimizes the total transportation cost $$\text{Emd}(\mathbf{p}, \mathbf{q}) = \min \sum_{i=1}^{n} \sum_{j=1}^{m} f_{ij} d_{ij}$$ subject to the following constraints:
\begin{align}
    &\displaystyle\sum_{j=1}^{m} f_{ij} = p_i, \quad \forall i = 1, 2, \dots, n \nonumber \\ %\quad (\text{flow conservation for } \mathbf{p}) \nonumber \\
    &\displaystyle\sum_{i=1}^{n} f_{ij} = q_j, \quad \forall j = 1, 2, \dots, m \nonumber \\ %\quad (\text{flow conservation for } \mathbf{q}) \nonumber \\
    &f_{ij} \geq 0, \quad \forall i, j \nonumber %\quad (\text{non-negativity constraint}) \nonumber
\end{align}
where $f_{ij}$ represents the amount of flow from $p_i$ to $q_j$. Since it requires solving linear programming equations, the computational complexity of the EMD is sensitive to the dimensionality of the histograms, and approximate algorithms are usually used for larger-scale problems. However, the dimensionality of winrate-based features is small, with a dimension of 3 in a two-player scenario, so we attempt to use a fast algorithm for accurately computing the EMD \citep{BPPH11}. $w_0, \dots, w_k$ are hyperparameters used to control the importance of EMD at each phase $r, \dots, r-k$, and the idea behind this design is to transform the similarity between two signal observation infoset into a linear combination of the EMD distances between their k-recall winrate features' winrates across different phases. We use the k-means++ algorithm \citep{arthur2007k} to further cluster the abstracted signal observation infoset equivalence classes of k-RWI. We named this algorithm KrwEmd.

\section{Experiment}

In this section, we present the experimental setup and results conducted in the Numeral211 Hold'em environment, which is designed specifically to study the hand abstraction task\cite{fu2025beyond}.

\subsection{Numeral211 Hold'em}

Experimental evaluation of hand abstraction in standard Texas Hold’em variants (e.g., HUNL) faces two key challenges: generating multiple hand abstractions via clustering is extremely time-consuming due to their complex state spaces, and evaluating abstraction effectiveness (e.g., calculating exploitability) remains computationally prohibitive with current resources. Meanwhile, existing simplified environments (Leduc Hold'em~\cite{southey2005bayes}, Kuhn Poker~\cite{kuhn1950simplified}) have overly simplistic hand structures—their limited hand categories fail to reflect real poker complexity, making meaningful comparisons of hand abstraction algorithms impossible. As an ideal trade-off, Numeral211 Hold’em is a custom variant tailored for hand abstraction evaluation: it enables rigorous strategy quality assessment on standard consumer computers while retaining diverse hand categories, balancing computational feasibility and evaluative validity.

Numeral211 Hold’em is a two-player game with the following rules: Each player antes 5 chips into the pot before the game starts. It uses a reduced 40-card deck (four suits: spades, hearts, clubs, diamonds; each suit contains 10 numeric cards: 2–9 and Ace). The game proceeds through three betting phases: the first phase deals two private cards to each player, and the next two phases reveal one community card each. Player 1 acts first in Phase 1, while Player 2 acts first in Phases 2 and 3. During each phase, the first actor may either bet a fixed number of chips (10 in Phase 1, 20 in Phases 2–3) to initiate the betting phase or check (pass the action right to the opponent). Once betting starts, players take turns raising (10 in Phase 1, 20 in Phases 2–3; maximum 3 combined raises per phase), calling, or folding. The game advances to the next phase (or concludes) if both players check or one calls the other’s action; folding results in forfeiting the pot to the opponent. Only if the game reaches the final phase without a fold will a showdown occur: players reveal their private cards, and the pot is awarded based on the strength of their hands (specifically, the strongest three-card combination), with hand ranks summarized in Figure~\ref{fig:handranks}.

\begin{figure}[h]%
\centering
\includegraphics[width=0.5\textwidth]{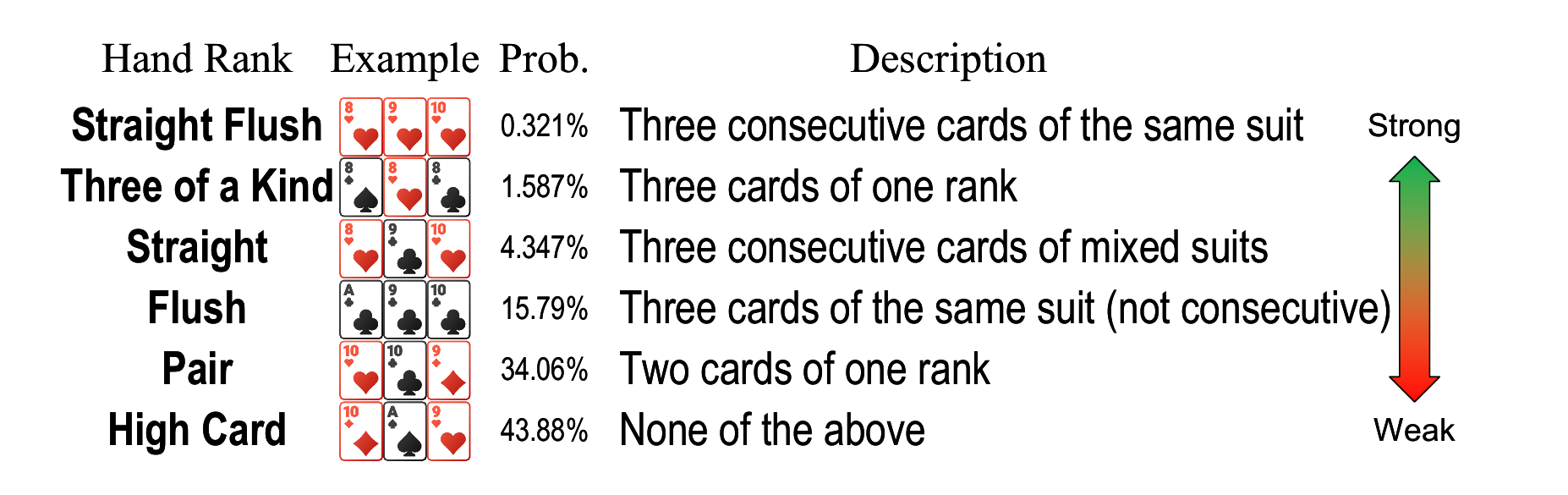}
\caption{The hand ranks of Numeral211 hold'em.}\label{fig:handranks}
\end{figure}

While our experiments use Numeral211 Hold’em, this variant shares core structural features with standard Texas Hold’em (e.g., HUNL)—including partial information, asymmetry, and phase-based decisions—with differences limited to hyperparameters (3 betting phases, 40-card deck, 3-card combinations). These adjustments do not alter hand abstraction logic, so our KrwEmd algorithm’s effectiveness here extends directly to real-world scenarios, requiring only hyperparameter matching (e.g., deck size) and accounting for reasonable computational overhead. Table \ref{tbl:krwi-kroi-numeral211} presents the number of signal observation infoset equivalence classes identified by different algorithms in Numeral211 Hold’em.

\begin{table}[ht]
  \centering
\resizebox{\columnwidth}{!}{%
\begin{tabular}{ccccccc}
\specialrule{1.2pt}{0pt}{0pt}
\hline
         & Phase 1 & \multicolumn{2}{c}{Phase 2} & \multicolumn{3}{c}{Phase 3}                   \\
         \cmidrule(lr){2-2} \cmidrule(lr){3-4} \cmidrule(lr){5-7} 
LI       & 100     & \multicolumn{2}{c}{2260} & \multicolumn{3}{c}{62020}       \\
         \cmidrule(lr){2-2} \cmidrule(lr){3-4} \cmidrule(lr){5-7} 
Recall (k)   & 0       & 0           & 1           & 0       & 1        & 2               \\ 
k-RWI     & 100     & 2234        & 2248     & 3957 & 51000 & 51070  \\
k-ROI     & 100     & 2250        & 2260     & 3957 & 51176 & 51228  \\
W/O (\%) & 100.0   & 99.29       & 99.47       & 100.0   & 99.67    & 99.69  \\ \hline   \bottomrule[1.2pt]
\end{tabular}%
}
\caption{The number of signal observation infoset equivalence classes identified by LI, k-RWI, and k-ROI in each phase of Numeral211 Hold'em, with W/O indicating the ratio of signal observation infoset equivalence classes identified by k-RWI to those identified by k-ROI.}
\label{tbl:krwi-kroi-numeral211}
\end{table}

\subsection{Experimental Setup}

Let $\alpha = (\alpha_1, \alpha_2)$ represent the signal observation abstraction to be assessed. Its strength is evaluated by measuring the \textbf{exploitability} , which is measured in terms of milli blinds (antes) per game (mb/g), of the approximate equilibrium derived using the CSMCCFR algorithm \citep{zinkevich2007regret, lanctot2009monte}, considering both symmetric and asymmetric abstraction scenarios.

In the \textbf{symmetric abstraction scenario}, commonly used in practical applications of abstraction, high-level AIs such as Libratus and DeepStack employ self-play to derive advanced strategies. We measure the exploitability of the approximate equilibrium yielded when both players in the game use signal observation abstraction. However, it may lead to the abstraction pathology \citep{waugh2009abstraction}. To avoid such problems, we illustrate the theoretical performance of the signal observation abstraction under evaluation through \textbf{asymmetric abstraction scenario}. In this scenario, the approximate equilibria of the signal-abstracted games $\tilde{\mathcal{G}}^{(\alpha_1, \vartheta_2)}$ and $\tilde{\mathcal{G}}^{(\vartheta_1, \alpha_2)}$ are computed, yielding strategy profiles $\sigma^{*,1}$ and $\sigma^{*,2}$, respectively. These two strategy profiles are then concatenated to form a strategy $\sigma' = (\sigma^{*,1}_1, \sigma^{*,2}_2)$, and the exploitability of $\sigma'$ is evaluated to assess the quality of the abstraction.

Regarding KrwEmd, we adopt a straightforward, intuitive distance matrix to quantify transitions between game outcomes:\[\mathbf{D} = \begin{bmatrix} 
0 & 1 & 2 \\ 
1 & 0 & 1 \\ 
2 & 1 & 0 
\end{bmatrix}\]In a two-player setting, its interpretation is deliberately simple: the first row (loss state) shows distances of 0 (loss→loss), 1 (loss→draw), and 2 (loss→win). This vanilla design uses distances to directly reflect intuitive "steps" between outcomes—e.g., a loss-to-win transition (two steps) carries twice the distance of a loss-to-draw shift (one step).

Our experiments were conducted on a device equipped with 754 GB of memory and an Intel(R) Xeon(R) Gold 6240R CPU (featuring a 2.4 GHz base frequency, a maximum turbo frequency of up to 4.00 GHz, and 96 threads).

\subsection{Results}

\begin{figure*}[htbp]
  \begin{minipage}[b]{0.49\textwidth}
    \centering
    \includegraphics[width=\linewidth]{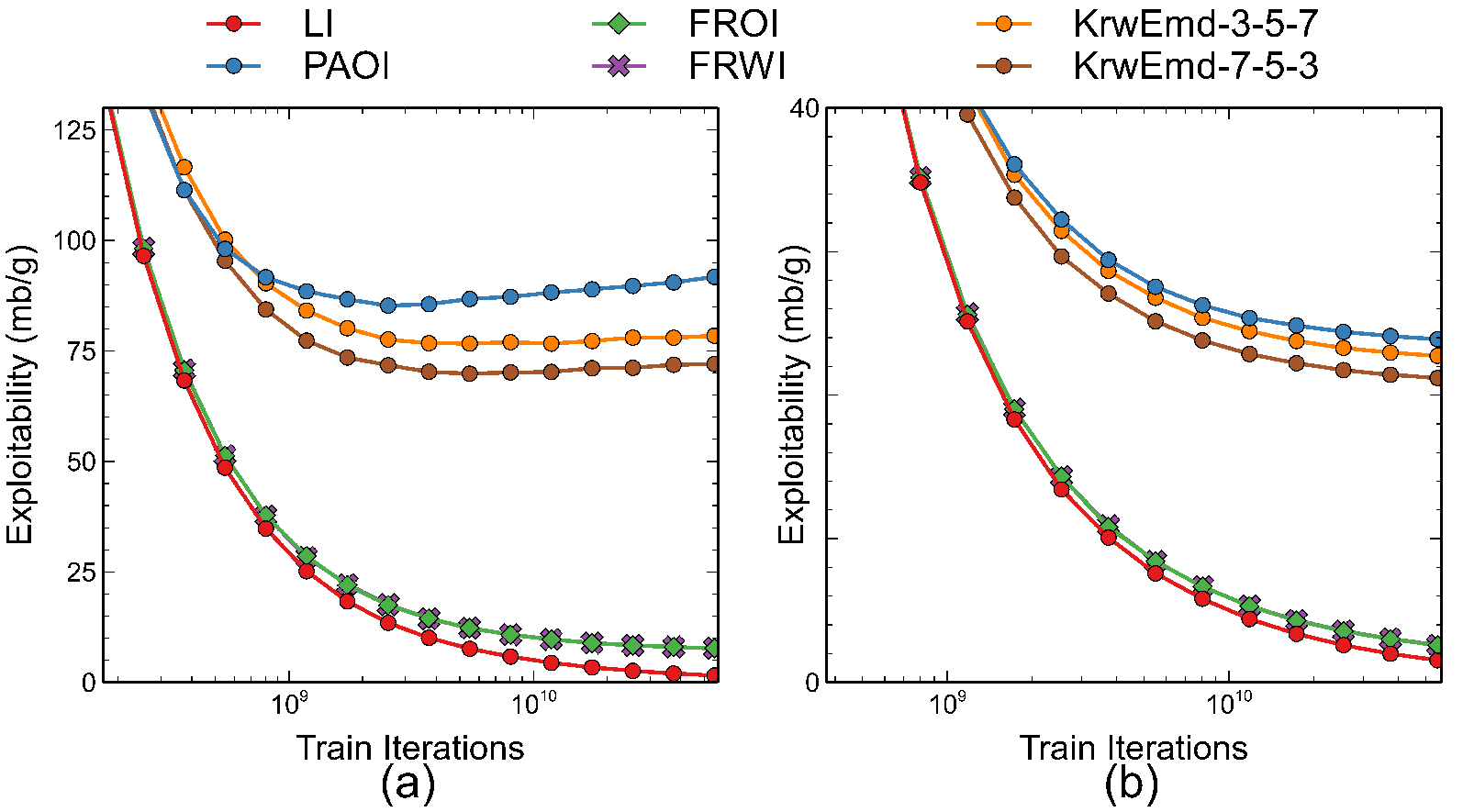}
    \caption{The isomorphism frameworks experiment was trained for $5.5\times 10^{10}$ iterations, with (a) representing the symmetric abstraction setting and (b) representing the asymmetric abstraction setting. Both instances of KrwEmd outperform PAOI, while the performance of 2-RWI and 2-ROI shows almost no difference in the Numeral211 environment.}
    \label{fig:isomorphism-exp}
  \end{minipage}
  \hspace{0.0\linewidth} % Adjust the space between the figures
  \begin{minipage}[b]{0.49\textwidth}
    \centering
    \includegraphics[width=\linewidth]{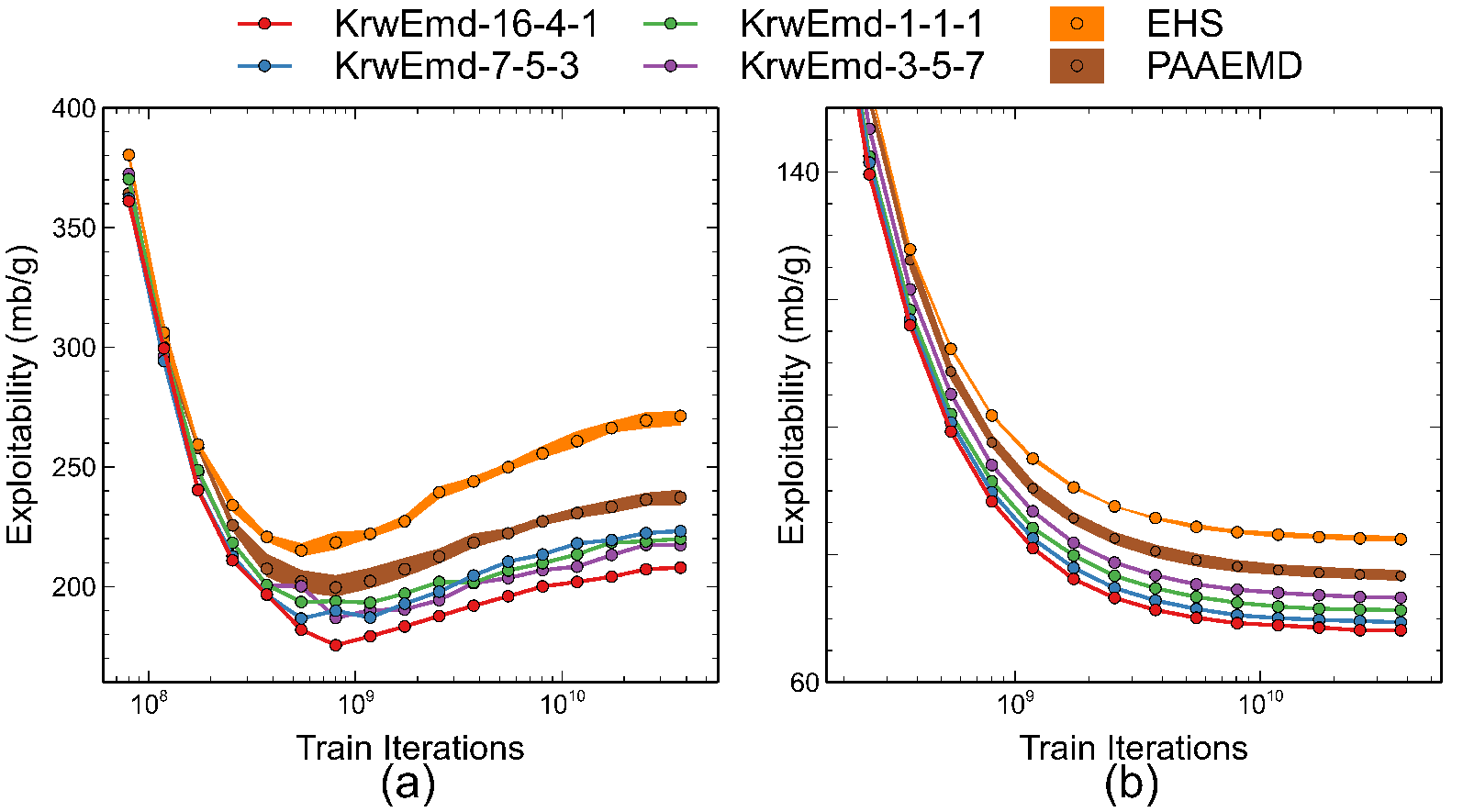}
    \caption{Performance comparison of KrwEmd versus other imperfect-recall signal observation abstraction algorithms considering only future information, trained for $3.7 \times 10^{10}$ iterations. All instances of KrwEmd outperform the benchmark, and comparisons between KrwEmd instances indicate that late-game information is more important than early-game information.}
    \label{fig:abstraction-exp}
  \end{minipage}
\end{figure*}

Firstly, we assess the performance of full-recall winrate isomorphism (FROI; which adopts $r-1$-RWI in phase-r) in comparison to other isomorphism frameworks—specifically FROI, PAOI and LI. Note that PAOI serves as the resolution bound for existing extreme implementations of imperfect-recall hand abstraction algorithms, which discard historical information entirely. Moreover, since previous works (KROI) could not control the number of abstracted signal observation infosets, they were unable to demonstrate whether incorporating historical information in signal observation abstraction outperformed abstraction with the same number of abstracted infosets. To address this, we included KrwEmd with the number of abstracted signal observation infosets set to match that of PAOI for a fair comparison in the isomorphism frameworks experiment. Note here that 0-RWI and 0-ROI exhibit identical capability in identifying signal infoset equivalence classes during Phase 1 (see Table \ref{tbl:krwi-kroi-numeral211}). Further, 0-RWI, 1-RWI, 0-ROI, and 1-ROI show quite small differences in recognizing these equivalence classes in Phase 2, as illustrated in Table \ref{tbl:krwi-kroi-numeral211}. Thus, we can directly allow clustering of KrwEmd abstraction use the signal observation infoset equivalence classes identified by PAOI in Phase 1 and 2, and only perform clustering in Phase 3. Here, we design four sets of hyperparameters \((w_0, w_1, w_2)\) for Equation \eqref{equa:distance-of-krwemd}, corresponding to different weight distributions for the importance of historical information: (1) sharply decreasing weights: \((16, 4, 1)\); (2) decreasing weights: \((7, 5, 3)\) (denoted as KrwEmd-7-5-3); (3) constant weights: \((1, 1, 1)\); and (4) increasing weights: \((3, 5, 7)\) (denoted as KrwEmd-3-5-7). For result visualization, Figure \ref{fig:isomorphism-exp}a presents the outcomes of symmetric abstraction, while Figure \ref{fig:isomorphism-exp}b shows those of asymmetric abstraction. To ensure the figures remain neat, only results from the best-performing (KrwEmd-7-5-3) and worst-performing (KrwEmd-3-5-7) hyperparameter sets are displayed. We observed that although the exploitability differed between the two experiments, the relative rankings of each group remained consistent (i.e., if A outperformed B in symmetric abstraction, it also did so in asymmetric abstraction). This consistent performance across experiments indicates the absence of abstraction pathology. As expected, overfitting was observed in the symmetric abstraction scenario, though it was only significant for PAOI. The performance difference between FRWI and FROI is small, which is related to the fact that the number of signal observation infoset equivalence classes identified by 2-RWI and 2-ROI in Numeral211 is similar (W/O generally exceeds 99\%). However, in HUNL—where the overlap rate (W/O) drops to around 90\%—we believe differences between the two exist, though such differences will not be significant. Most importantly, KrwEmd, outperforms PAOI—even with the worst hyperparameter (KrwEmd-3-5-7).

% 接着，我们对比了 KrwEmd 与当前应用的信号抽象算法 EHS 和 PAAEMD 的性能。需要注意的是，EHS 和 PAAEMD 都以 PAOI 作为公共精炼，即它们能识别的最大信号信息集数不会超过 PAOI。我们设置了相较于 PAOI 的 10 倍压缩率。在 phase 1，由于识别的信息集过小且不存在历史信息，所有算法的识别能力一致，因此我们不对 phase 1 进行抽象。最终设置的抽象信息集数为 $(100, 225, 396)$。为了排除随机事件对性能的影响，我们为 EHS 和 PAAEMD 各生成了 3 组抽象。KrwEmd 在 phase 2 和 phase 3 使用的超参数 $(w_{3,0}, w_{3,1}, w_{3,2}; w_{2,0}, w_{2,1})$ 分别为历史信息重要性指数下降 $(16, 4, 1; 4, 1)$、线性下降 $(7, 5, 3; 5, 3)$、不变 $(1, 1, 1; 1, 1)$ 以及上升 $(3, 5, 7; 5, 7)$。另外，由于 PAAEMD 使用了近似 EMD 的计算，其近似算法具有非对称性，使得算法难以收敛。我们在单核下进行了 1000 次迭代后截断，平均开销为 1000 秒，而 EHS 和 KrwEmd 都可以得到收敛的聚类结果，平均需要进行 12.3 次和 96.7 次迭代，平均时间开销分别为 11.2 秒和 341.4 秒。

Next, we compared KrwEmd's performance with the currently applied hand abstraction algorithms, EHS and PAAEMD. It should be noted that PAOI is the resolution bound both for EHS and PAAEMD, meaning that the maximum number of abstracted signal observation infosets that EHS and PAAEMD can identify will not exceed the number of equivalence classes defined by PAOI. We set a compression rate that is 10 times lower than that of PAOI, while not performing abstraction for Preflop. The final number of abstracted signal observation infosets is set to 100, 225, 396 for Phase 1, 2 and 3. To exclude the influence of random events on performance, we generated 5 sets of abstractions for EHS and PAAEMD each. KrwEmd employs hyperparameters \((w_{3,0}, w_{3,1}, w_{3,2}; w_{2,0}, w_{2,1})\) for Phase 3 and Phase 2, respectively. These hyperparameter sets correspond to different weight distributions for the importance of historical information, as follows: (1) sharply decreasing  weights: \((16, 4, 1; 4, 1)\) (denoted as KrwEmd-16-4-1); (2) decreasing weights: \((7, 5, 3; 5, 3)\) (denoted as KrwEmd-7-5-3); (3) constant weights: \((1, 1, 1; 1, 1)\) (denoted as KrwEmd-1-1-1); and (4) increasing weights: \((3, 5, 7; 5, 7)\) (denoted as KrwEmd-3-5-7). Additionally, since PAAEMD uses approximate EMD calculations, its approximate distance is asymmetric, making it difficult for the algorithm to converge. We truncated after 1000 iterations on a single core, with an average cost of 1427.7s, while EHS and KrwEmd both achieved convergent clustering results, requiring an average of 12.3 and 96.7 iterations, with average time costs of 11.2s and 341.4s, respectively.

% 图 (a) 显示的是对称抽象的实验结果，图 (b) 则是非对称抽象的实验结果。我们观察到，无论是对称抽象还是非对称抽象，与 full abstraction 场景一样，都保持了一致的抽象性能，没有出现明显的抽象病态。如我们预期的那样，在对称抽象场景中观测到了明显的过拟合现象，而在非对称抽象场景中则没有出现。实验结果显示，在各个参数设定下，KrwEmd 的性能都远高于 EHS 和 PAAEMD。我们的实验还验证了，尽管 PAAEMD 存在难以收敛的问题，但它确实比 EHS 更有效的抽象算法【】。此外，我们进一步验证了历史信息的重要性是自下向上逐层降低的，尽管这次表现最好的参数是指数下降（exponentially decreasing）而非上一次实验的线性下降（linearly decreasing）。

Figure \ref{fig:abstraction-exp}a presents the results of the symmetric abstraction setting, while Figure \ref{fig:abstraction-exp}b shows the results for the asymmetric abstraction setting. We observed that both symmetric and asymmetric abstractions maintained consistent performance, similar to the isomorphism frameworks experiment, without significant abstraction pathologies, despite noticeable overfitting in all abstraction algorithms under the symmetric setting. The experimental results indicate that KrwEmd significantly outperforms both EHS and PAAEMD across all parameter configurations. Furthermore, we validated that the importance of historical information decreases progressively from the late game to the early game. Notably, unlike the isomorphism frameworks experiment—where the optimal parameter used \textbf{decreasing weights}—the best-performing parameter in this case adopted \textbf{sharply decreasing weights}.

By providing a fair comparison, these two experiments validate that integrate historical information is indeed more effective than the approach of discarding historical information entirely in signal observation abstraction.

\section{Conclusion, Limitation, and Future Work} \label{sec:conclusion}
% 本研究提出了第一个考虑历史信息的不完美回忆信号抽象算法，该算法具备调整抽象信号信息集规模的能力。基于此，在相同抽象信号信息集的設定下，我们充分验证了考虑历史信息优于只考虑未来信息的非完美回忆信号抽象和抽象算法。因此，KrwEmd算法目前已取代PAAEMD算法，成为该领域的最佳状态（SOTA）。基于KrwEmd算法，我们有望构建出更强大的德州扑克AI。

This research introduces the first imperfect-recall signal observation abstraction algorithm that considers historical information. This algorithm has the ability to adjust the scale of the abstracted signal observation infosets. Based on this, we fully verified that the imperfect-recall signal observation abstraction algorithms considering historical information is superior to that only considering future information. Imperfect-recall abstraction should be reexamined to introduce historical information and avoid excessive abstraction. Krwemd can help existing AIs achieve better performance.

% KrwEmd is more competitive than previous algorithms; however, it inevitably introduces significant computational overhead. This is because KrwEmd uses the k-means algorithm, whose time complexity scales proportionally with the size of the input data (in our case, the size of the KRWI signal observation infoset equivalence classes). In contrast, future considered only algorithms perform k-means clustering using PAWI as input, which is much smaller in scale. In Appendix \ref{apdx:accelerating-krwemd}, we present an acceleration method that reduces the computational cost of calculating the earth mover's distance in KrwEmd to a scale comparable to that of future considered only algorithms. However, the complexity of the clustering algorithm remains dependent on the size of the KRWI.

There are two potential directions for future improvements. The first is to adopt distributed computing and approximation algorithms to reduce computational complexity. The second is to explore non-k-means algorithms and leverage machine learning techniques to incorporate historical information more effectively. Regardless of the approach, incorporating historical information in hand abstraction will help build more powerful poker game AI systems.

\bibliographystyle{plainnat}
\bibliography{references}

\end{document}